\documentclass{article}

\usepackage[preprint]{neurips_2026}

\usepackage[utf8]{inputenc} 
\usepackage[T1]{fontenc}    
\usepackage{hyperref}       
\usepackage{url}            
\usepackage{booktabs}       
\usepackage{amsfonts}       
\usepackage{nicefrac}       
\usepackage{microtype}      
\usepackage{xcolor}         
\usepackage{bbm}
\usepackage{fancyvrb}
\usepackage[commands,theorems]{AVT}
\usepackage[most]{tcolorbox}

\usepackage{algorithm}
\usepackage{algpseudocode}

\usepackage{subcaption}

\usepackage{amsmath}
\RequirePackage{amssymb}
\RequirePackage{amsthm}

\DeclareMathOperator{\KL}{\mathsf{KL}}

\DeclareMathOperator{\Id}{\mathrm{Id}}

\def\abs#1{\left|#1\right|}
\def\norm#1{\left\lVert#1\right\rVert}

\def\v#1{\boldsymbol{#1}}

\hypersetup{colorlinks,citecolor=blue}

\newcommand{\cN}{\mathcal{N}}

\newcommand{\cV}{\mathcal{V}}
\newcommand{\n}{\newline}

\newcommand{\ommit}{\begin{center}[\dots omitting lines for space\dots]\end{center}}
\newcommand{\MCB}{\mathrm{MCB}}
\newcommand{\DDPM}{\mathrm{DDPM}}

\newtcolorbox{promptbox}[2][]{
    enhanced,       
    colback=gray!5,
    colframe=black!80,
    title={#2},
    #1
}
\title{  Sampling from Flow Language Models via Marginal-Conditioned Bridges}

\author{%
  Iskander Azangulov \\
  Department of Statistics, University of Oxford\\
  \texttt{iskander.azangulov@spc.ox.ac.uk}
  \And
  Leo Zhang\\
    Department of Statistics, University of Oxford\\
  \texttt{leo.zhang@stx.ox.ac.uk}
}

\begin{document}

\maketitle

\begin{abstract}
\looseness=-1
Flow Language Models (FLMs) are a recently introduced class of language models which adapt continuous flow matching for one-hot encoded token sequences.  Their
denoisers have a special structure absent from generic continuous diffusion
models: each block of the denoising mean is a posterior marginal distribution
over the clean token at that position.  
Standard DDPM-style samplers collapse these marginals to a single conditional-mean endpoint and bridge toward this
simplex-valued point, which is generally not a valid one-hot sequence.  We argue
that the natural sampler for an FLM is instead posterior-predictive.  At each
reverse step, we sample a clean one-hot endpoint from the factorized posterior
defined by the FLM token marginals, and then sample the next continuous state
from the analytic Ornstein--Uhlenbeck bridge conditioned on that endpoint.  The
method is training-free, uses the same model evaluations as standard sampling,
and gives a principled interface for token-level decoding controls such as
temperature scaling and nucleus truncation.  We show that, under exact posterior
marginals, the endpoint approximation error is exactly the conditional
multi-information among token positions.  The induced one-step bridge kernel
preserves all token-wise posterior-predictive marginals and loses only the
residual cross-position dependence.  Finally, we prove a Girsanov path-space
comparison showing that the marginal-conditioned bridge has a no-larger
denoising-error term than the frozen conditional-mean bridge, with strict
improvement whenever intermediate coordinate-wise bridge observations reveal
additional information about the clean token.
Experiments with FLMs show that the sampler improves the quality--diversity
tradeoff.
\begin{center}
    Code is available at: \href{https://github.com/imbirik/mcb}{github.com/imbirik/mcb}.
\end{center}

\end{abstract}

\section{Introduction}

\looseness=-1
Autoregressive language models remain the dominant paradigm for language modeling, demonstrating impressive performance in recent years \citep{achiam2023gpt, team2023gemini}, however their left-to-right factorization makes sampling inherently sequential in the output length.  This limitation has motivated a growing family of
non-autoregressive language models that update many token positions in parallel,
including discrete diffusion models, discrete flow matching models, and
continuous diffusion or flow models for text~\citep{austin2023structureddenoisingdiffusionmodels, campbell2022continuoustimeframeworkdiscrete,lou2024discretediffusionmodelingestimating,sahoo2024simpleeffectivemaskeddiffusion, gat2024discreteflowmatching}.  This paper focuses on a recent member of
this family: Flow Language Models (FLMs), which apply continuous denoising or
flow modeling to one-hot token sequences
\citep{lee2026flow,chen2026langflow,roos2026categorical, potaptchik2026discrete}.

The one-hot representation gives FLMs a posterior structure that is absent from
generic continuous diffusion models.  Let
\(X_0\in\{e_1,\ldots,e_{|\mathcal V|}\}^L\) denote a clean one-hot sequence, and
let \(X_t\) be its Gaussian corrupted version.  The denoising mean
\[
    m_t(x):=\E[X_0\mid X_t=x]
\]
is not merely a vector-valued regression target.  Since each block
\(X_{0,\ell}\) is one-hot, the \((\ell,v)\)-coordinate of \(m_t(x)\) is exactly
a posterior token probability:
\[
    (m_t(x))_{\ell,v}
    =
    \P(X_{0,\ell}=e_v\mid X_t=x).
\]
Thus the output of an FLM can be interpreted as token-wise posterior marginals
over the clean sequence.

Most FLM sampling procedures inherit the standard continuous-diffusion
convention: the denoiser is treated as a point estimate, and the reverse update
bridges toward the conditional mean \(m_t(x)\).  This is a natural convention
for continuous data, where the denoising mean may itself be a plausible data
point.  In a one-hot language model, however, the same operation has a different
interpretation.  The mean \(m_t(x)\) usually lies in the interior of a product
of simplices; it is not a valid one-hot sequence, but a representation of
uncertainty over many possible clean sequences.  A bridge to this mean is
therefore a moment-matched approximation to a posterior-predictive transition,
rather than the transition obtained by sampling from the posterior represented
by the model.

This observation leads to a simple question: given the posterior marginals
provided by an FLM, what is the natural posterior-predictive reverse step?  For
two noise levels \(s<t\), the exact reverse transition can be written as
\[
    q(x_s\mid x_t)
    =
    \sum_{x_0\in\mathcal X_0}
    q(x_s\mid x_t,x_0)\,q(x_0\mid x_t),
\]
where
\[
    \mathcal X_0:=\{e_v:v\in\mathcal V\}^L
\]
is the set of valid one-hot sequences.  For the Ornstein--Uhlenbeck corruption
process considered in this paper, the bridge transition
\(q(x_s\mid x_t,x_0)\) is Gaussian and available in closed form.  The only
intractable object is the joint clean posterior \(q(x_0\mid x_t)\), which
couples all token positions.  An FLM, however, provides its token marginals.
This naturally defines the factorized endpoint posterior
\[
    q_t^I(x_0\mid x)
    :=
    \prod_{\ell=1}^L q_{t,\ell}(x_{0,\ell}\mid x),
\]
which is precisely the posterior family represented by the model.

We propose marginal-conditioned bridge (MCB) sampling, the posterior-predictive
sampler induced by this factorized endpoint posterior.  At each reverse step,
the sampler evaluates the FLM posterior marginals at the current continuous
state, independently samples one clean token at every position, and then samples
the next continuous state from the analytic OU bridge conditioned on both the
current state and the sampled one-hot endpoint.  Equivalently, the sampler
replaces a single bridge to the conditional-mean endpoint by a mixture of
bridges to valid clean sequences:
\[
    \sum_{x_0\in\mathcal X_0}
    q_{\mathrm{OU}}(x_s\mid x_t,x_0)
    \prod_{\ell=1}^L q_{t,\ell}(x_{0,\ell}\mid x_t).
\]
The method is training-free and uses the same FLM posterior-marginal evaluation
as the conditional-mean sampler.  Because the endpoint distribution is
categorical, the sampler also gives a direct non-autoregressive interface for
token-level decoding controls such as temperature scaling and nucleus
truncation.

Furthermore, we provide a theoretical analysis of the proposed MCB sampler.
Our analysis separates the statistical approximation from the sampling
procedure.  First, under exact posterior marginals, replacing the true joint
clean posterior by the product of its token marginals incurs exactly the
conditional multi-information among token positions.  With approximate
marginals, the error decomposes into this dependence term plus token-wise
marginal estimation errors.  Second, the induced bridge kernel preserves every
token-wise posterior-predictive transition marginal and loses only residual
cross-position dependence.  Third, we compare the continuous-time bridge
interpolations of marginal-conditioned sampling and the conditional-mean bridge.
Under standard Girsanov regularity assumptions, the marginal-conditioned bridge
has no larger path-space KL error to the true reverse process, with strict
improvement whenever the coordinate-wise bridge observation provides additional
information about the clean token.

Our contributions are:
\begin{enumerate}
    \item We introduce marginal-conditioned bridge sampling, a training-free
    posterior-predictive sampler for FLMs that samples valid one-hot clean
    endpoints from the factorized posterior represented by the FLM marginals and
    then applies the analytic OU bridge transition.

    \item We prove that the cost of factorizing the clean endpoint posterior is
    exactly the conditional multi-information among token positions, with an
    additional token-wise decomposition for approximate posterior marginals.

    \item We show that the one-step bridge kernel induced by the sampler
    preserves all token-wise posterior-predictive transition marginals and loses
    only cross-token dependence.

    \item We prove a Girsanov path-space comparison showing that, under exact
    posterior marginals and standard regularity assumptions, the
    marginal-conditioned bridge has no larger KL error than the
    conditional-mean bridge.

    \item We evaluate the sampler on FLMs trained on LM1B and OWT and show that it
    improves the quality--diversity tradeoff without retraining, while enabling
    endpoint-level temperature and nucleus decoding.
\end{enumerate}

\section{Related Work}

\paragraph{Diffusion, flow matching, and bridge samplers.}
Diffusion models generate samples by reversing a noising process, either through
discrete-time denoising transitions or continuous-time reverse SDEs
\citep{sohl2015deep,ho2020denoising,song2020score}.  Flow matching and
stochastic interpolants provide closely related continuous-time frameworks in
which a model learns a vector field or denoiser along a prescribed path between
noise and data \citep{lipman2022flow,albergo2022building,albergo2025stochastic}.
Our work uses the Ornstein--Uhlenbeck path because its pinned bridge transition
has a closed form.  This allows posterior information over clean endpoints to be
converted directly into an analytic reverse transition.

\paragraph{Discrete diffusion and flow models for language.}
A major line of work builds generative models directly on discrete state spaces.
Structured categorical diffusion models define noising and denoising kernels on
tokens, while continuous-time discrete denoising models formulate the process as
a Markov chain over discrete states~\citep{austin2023structureddenoisingdiffusionmodels,campbell2022continuoustimeframeworkdiscrete}.  Discrete flow matching
models similarly learn probability paths over categorical variables and have
shown promising results for language and code generation~\citep{lou2024discretediffusionmodelingestimating,sahoo2024simpleeffectivemaskeddiffusion,gat2024discreteflowmatching}.  These methods
typically rely on tractable token-wise or factorized reverse transitions.  Our
analysis makes the cost of such factorization explicit: the missing term is the conditional mutual-information among token positions.

\paragraph{Flow language models.}
Another line of work applies continuous diffusion and flow matching to language modeling by embedding tokens into Euclidean space.  Flow Language Models (FLMs) take a particularly direct approach by applying the continuous corruption process to one-hot token representations \citep{lee2026flow,chen2026langflow,roos2026categorical, potaptchik2026discrete}.  In
this representation, the conditional mean can be viewed as the factorized posterior  over clean tokens.
Our work focuses on analyzing and improving the sampling of such FLMs by exploiting this additional structure.
Closest to our work, \cite{sevriugovlogit} also samples clean endpoints from
factorized token marginals, but its stochastic inference step re-noises the
sampled endpoint using a fresh base sample rather than sampling the conditional
bridge; the authors note that this sampling-only
procedure suffers from low entropy and introduce a hybrid ODE-stochastic sampler to mitigate this, whereas our method uses the exact OU bridge conditioned on both the current state and the sampled endpoint which does not suffer from this issue.

\paragraph{Few-step generation and decoding controls.}
Many approaches to faster generation modify the training objective, distill a
many-step sampler into a few-step sampler, or learn a direct flow map.  By
contrast, our method changes only the inference procedure.  Because the bridge
endpoint is sampled from explicit categorical distributions, standard decoding
controls can be applied directly to the endpoint marginals.  For example,
temperature scaling changes the sharpness of the factorized endpoint posterior,
and nucleus sampling truncates its low-probability tail
\citep{holtzman2020curiouscaseneuraltext}.  In this sense, marginal-conditioned bridges give
FLMs a natural non-autoregressive analogue of token-level decoding.

\section{Background}

\subsection{Diffusion models}

Suppose we want to generate samples from a distribution \(\mu\) on
\(\R^d\).  Diffusion models achieve this by reversing a forward process that
progressively corrupts data into noise.  We consider the Ornstein--Uhlenbeck \citep{maller2009ornstein}
forward process
\begin{equation}
\label{eq:forward}
\begin{cases}
    dX_t=-X_t\,dt+\sqrt{2}\,dB_t,\qquad t\in[0,T],\\
    X_0\sim\mu,
\end{cases}
\end{equation}
where \(B_t\) is a \(d\)-dimensional Brownian motion.  Let $c_t:=e^{-t}, \sigma_t^2:=1-e^{-2t}$, we can express the forward transition kernel in the closed-form: $X_t\mid X_0
    \sim
    \cN(c_tX_0,\sigma_t^2\Id_d)$.
We write \(p_t\) for the marginal density of \(X_t\) and
\(s_t(x):=\nabla_x\log p_t(x)\) for its score.

Under standard regularity assumptions \citep{ANDERSON1982313}, the reverse
process \(Y_t:=X_{T-t}\) satisfies
\begin{equation}
\label{eq:backward}
\begin{cases}
    dY_t=\bigl(Y_t+2s_{T-t}(Y_t)\bigr)\,dt+\sqrt{2}\,dB'_t,\qquad t\in[0,T],\\
    Y_0\sim p_T,
\end{cases}
\end{equation}
where \(B'_t\) is another Brownian motion.  Thus, if we could initialize
\(Y_0\sim p_T\) and simulate \eqref{eq:backward} exactly, then
\(Y_T\sim\mu\).
In practice, we approximate the score function with a learned neural network, we initialize $Y_0$ at $\cN(0, \Id_d)$ and discretize \eqref{eq:backward} to draw samples approximating $\mu$.

Additionally, we note that the score can be written in terms of the conditional mean $m_t$ by Tweedie's formula
\begin{equation}
\label{eq:tweedie}
    s_t(x)
    =
    \frac{c_t m_t(x)-x}{\sigma_t^2} \qquad \text{ where } \qquad m_t(x):=\E[X_0\mid X_t=x]
\end{equation}
Therefore, the reverse drift can be parameterized by an estimate of
\(m_t(x)\).

\subsection{Flow Language Models}
\label{sec:flm}

Let \(\cV\) be a finite vocabulary, \(V:=|\cV|\), and let \(L\) be the sequence
length.  A sequence \(w=(w_1,\ldots,w_L)\in\cV^L\) is embedded as a one-hot
vector \(x(w)\in\R^{D}\), where \(D=LV\), with block
\[
    x(w)_\ell=e_{w_\ell}\in\R^V.
\]
Let \(\nu\) be a distribution on \(\cV^L\), and let \(\mu\) be its pushforward
under the one-hot embedding.  A Flow Language Model (FLM) applies the continuous
corruption process \eqref{eq:forward} to samples \(X_0=x(W)\), \(W\sim\nu\) and defines a generative model of sequence data from learning and sampling the reverse process.

The key property of the one-hot representation is that the denoising mean is a
collection of token posterior marginals.  Indeed,
\begin{equation}
\label{eq:flm_marginals}
    \bigl(m_t(x)\bigr)_{\ell,v}
    =
    \P(W_\ell=v\mid X_t=x)
    =
    \P(X_{0,\ell}=e_v\mid X_t=x).
\end{equation}
Thus an FLM can be trained as a posterior-marginal predictor
\[
    p_{\theta,t,\ell}(\cdot\mid x)\in\Delta_{V-1},
\]
where $\Delta_{V-1}$ denotes the $(V-1)$-simplex, via the cross-entropy objective:
\begin{equation}
\label{eq:flm_loss}
    \mathcal L(\theta)
    =
    \E_{W\sim\nu,\;Z\sim\cN(0,I_D),\;t\sim\lambda}
    \left[
        -\sum_{\ell=1}^L
        \alpha_t
        \log
        p_{\theta,t,\ell}
        \bigl(W_\ell\mid c_t x(W)+\sigma_t Z\bigr)
    \right],
\end{equation}
where \(\lambda\) is the training-time distribution over noise levels and
\(\alpha_t\) is a weighting function.  At the population optimum $\theta^*$, we recover the exact posterior marginals:
\[
    p_{\theta^*,t,\ell}(\cdot\mid x)
    =
    \P(W_\ell=\cdot\mid X_t=x).
\]
This additional posterior marginal structure to FLMs provides the structure that we  exploit in our proposed sampler.

\subsection{OU bridges}
\label{sec:bridge}

From \citep{Mazzolo_2017}, the OU process admits an analytic bridge transition kernel.  For \(0\le s<t\), conditioning
on \(X_0=x_0\) and \(X_t=x_t\) gives
\begin{equation}
\label{eq:bridge_marginal_clean}
    X_s\mid X_0=x_0,\;X_t=x_t
    \sim
    \cN
    \left(
        \frac{\sinh(t-s)}{\sinh t}x_0
        +
        \frac{\sinh s}{\sinh t}x_t,\,
        2\frac{\sinh s\,\sinh(t-s)}{\sinh t}\Id_d
    \right).
\end{equation}
Equivalently, the pinned OU bridge ending at \(x_t\) satisfies
\begin{equation}
\label{eq:bridge_sde}
    dX_s^{x_t}
    =
    \frac{x_t-X_s^{x_t}\cosh(t-s)}{\sinh(t-s)}\,ds
    +
    \sqrt{2}\,dB_s,
    \qquad s\in[0,t).
\end{equation}
This bridge formula is the basic transition kernel employed by our sampler.

The reverse process can be viewed as a mixture of such bridges.  Conditional on
\(Y_\tau=y\), equivalently \(X_{T-\tau}=y\), the remaining reverse path is a
mixture of OU bridges from \(y\) to a clean endpoint
\[
    X_0\sim \mathcal L(X_0\mid X_{T-\tau}=y).
\]
\looseness=-1
Standard DDPM-style sampling replaces this random endpoint by the conditional
mean \(m_{T-\tau}(y)\).  Our sampler instead samples a one-hot endpoint from the
factorized posterior represented by the FLM.

\subsection{Conditional-mean DDPM sampling}
\label{sec:disc}

Let $0=t_0<t_1<\cdots<t_K=T$ be a reverse-time grid and define \(\gamma_k:=t_{k+1}-t_k\).  At reverse time
\(t_k\), the corresponding forward noise level is $u_k:=T-t_k$.

As shown in \cite{potaptchik2025linearconvergencediffusionmodels}, the DDPM sampler~\cite{ho2020denoising} can be written as a bridge sampler that freezes an estimate of
the clean endpoint at the current grid point.  If \(\hat m_{u_k}(y_{t_k})\) is an
estimate of \(m_{u_k}(y_{t_k})=\E[X_0\mid X_{u_k}=y_{t_k}]\), then the sampler
uses the OU bridge from \(y_{t_k}\) at noise level \(u_k\) to the deterministic
endpoint \(\hat m_{u_k}(y_{t_k})\).

In continuous time, this corresponds on each interval
\([t_k,t_{k+1})\) to the frozen-endpoint drift
\begin{equation}
\label{eq:frozen_mean_drift}
    d\hat Y_t
    =
    \frac{
        \hat m_{u_k}(\hat Y_{t_k})
        -
        \hat Y_t\cosh(T-t)
    }{
        \sinh(T-t)
    }\,dt
    +
    \sqrt{2}\,dB'_t.
\end{equation}
Using \eqref{eq:bridge_marginal_clean}, this is equivalent to the usual DDPM scheme.  The corresponding Girsanov bound  has the
form
\begin{align}
\label{eq:girsanov}
&\KL\bigl(\mathrm{Law}(Y_{[0,T]})\,\Vert\,\mathrm{Law}(\hat Y_{[0,T]})\bigr)
\nonumber\\
&\qquad
\le
\KL\bigl(p_T\Vert\cN(0,\Id_d)\bigr)
+
\sum_k
\int_{t_k}^{t_{k+1}}
\frac{c_{T-t}^2}{\sigma_{T-t}^4}
\E
\left[
    \norm{
        m_{T-t}(X_{T-t})
        -
        \hat m_{u_k}(X_{u_k})
    }^2
\right]dt,
\end{align}
with equality under the usual absolute-continuity and Novikov conditions.  The
quantity inside the integral is the denoising error induced by freezing the
endpoint estimate over the interval.

\section{Main Results}
\label{sec:main_result}

We now introduce the marginal-conditioned bridge (MCB) sampler and prove that given access to the exact posterior marginals, it improves over the conditional-mean bridge used by
standard DDPM-style sampling.
Throughout this section, let $\mathcal X_0:=\{e_v:v\in\mathcal V\}^L$
denote the set of valid clean one-hot sequences.  We use the reverse-time grid
\[
    0=t_0<t_1<\cdots<t_K=T,
    \qquad
    \gamma_k:=t_{k+1}-t_k,
\]
and write
\[
    u_k:=T-t_k
\]
for the corresponding forward noise level.  Thus a reverse step from
\(t_k\) to \(t_{k+1}\) corresponds to a forward-noise transition from
\(u_k\) down to \(u_{k+1}=u_k-\gamma_k\).

For \(0\le s<t\), let
\[
    B_{s,t}(dz\mid y,x_0)
    :=
    \mathcal L(X_s\mid X_t=y,X_0=x_0)
\]
be the OU bridge transition kernel.  By \eqref{eq:bridge_marginal_clean}, we have
\begin{equation}
\label{eq:bridge_kernel_main}
    B_{s,t}(dz\mid y,x_0)
    =
    \cN
    \left(
        \frac{\sinh(t-s)}{\sinh t}x_0
        +
        \frac{\sinh s}{\sinh t}y,\,
        2\frac{\sinh s\,\sinh(t-s)}{\sinh t}I_D
    \right)(dz).
\end{equation}

\begin{assumption}[Exact posterior marginals]
\label{asmp:exact_marginals}
For every noise level \(t\), state \(x\), and token position \(\ell\), the FLM
returns the exact posterior token marginal
\[
    q_{t,\ell}(\cdot\mid x)
    :=
    \mathcal L(X_{0,\ell}\mid X_t=x).
\]
\end{assumption}

Under this assumption, the FLM defines the factorized endpoint posterior
\begin{equation}
\label{eq:factorized_endpoint_posterior}
    q_t^I(x_0\mid x)
    :=
    \prod_{\ell=1}^L
    q_{t,\ell}(x_{0,\ell}\mid x),
    \qquad
    x_0\in\mathcal X_0.
\end{equation}
This is exactly the posterior distribution that the FLM can represent using
token-wise marginals.  The central question is how to sample from the reverse
process induced by \eqref{eq:factorized_endpoint_posterior}.

\subsection{Marginal-conditioned bridge sampler}
\label{ref:sampling}

The exact posterior-predictive reverse kernel from noise level \(u_k\) to
\(u_{k+1}\) is
\begin{equation}
\label{eq:true_posterior_predictive_kernel}
    K_k^\star(dz\mid y)
    =
    \sum_{x_0\in\mathcal X_0}
    B_{u_{k+1},u_k}(dz\mid y,x_0)
    q(x_0\mid X_{u_k}=y).
\end{equation}
This kernel is generally intractable because the true joint posterior
\(q(x_0\mid X_{u_k}=y)\) couples all token positions.

The marginal-conditioned bridge sampler replaces the true joint endpoint
posterior by the factorized posterior available from the FLM:
\begin{equation}
\label{eq:mcb_kernel}
    K_k^{\MCB}(dz\mid y)
    =
    \sum_{x_0\in\mathcal X_0}
    B_{u_{k+1},u_k}(dz\mid y,x_0)
    q_{u_k}^I(x_0\mid y).
\end{equation}
Equivalently, at step \(k\), we sample
\[
    X_0^I\sim q_{u_k}^I(\cdot\mid y_{t_k})
\]
and then sample
\begin{equation}
\label{eq:mcb_transition_main}
    y_{t_{k+1}}\mid y_{t_k},X_0^I
    \sim
    \cN
    \left(
        \frac{\sinh(\gamma_k)}{\sinh(u_k)}X_0^I
        +
        \frac{\sinh(u_k-\gamma_k)}{\sinh(u_k)}y_{t_k},\,
        2\frac{\sinh(\gamma_k)\sinh(u_k-\gamma_k)}{\sinh(u_k)}I_D
    \right).
\end{equation}
In practice, we implement this sampler by substituting our learned approximation $p_{\theta, t, \ell}$ for $q^I_t$.

The DDPM conditional-mean bridge instead replaces the random endpoint
\(X_0^I\) by its mean
\[
    m_{u_k}(y)
    :=
    \E[X_0\mid X_{u_k}=y].
\]
Thus its one-step kernel is
\begin{equation}
\label{eq:ddpm_kernel}
    K_k^{\DDPM}(dz\mid y)
    =
    B_{u_{k+1},u_k}(dz\mid y,m_{u_k}(y)).
\end{equation}
The two samplers therefore use the same posterior marginals, but in different
ways: MCB samples from the endpoint distribution, while DDPM collapses that
distribution to its mean. We provide the pseudo-code of the algorithm in Appendix~\ref{app:pseudocode}.

\subsection{Theoretical Analysis}
\label{sec:theory}
In order to analyze our proposed sampler, we present the following results.
First, we identify the posterior-predictive reverse kernel naturally defined by an FLM.  
Second, we quantify the approximation made by replacing the true joint endpoint posterior by the product of its token marginals.  
Third, we compare the KL error of the continuous-time interpolations of the MCB sampler and the DDPM sampler using a Girsanov path-space KL identity with respect to the ground-truth process, where we show the KL error for MCB is bounded by the KL error for DDPM.

\begin{proposition}[MCB is the posterior-predictive sampler for the FLM posterior]
\label{prop:mcb_posterior_predictive}
Fix a reverse step \(k\) and current state \(y\).  The kernel
\(K_k^{\MCB}\) in \eqref{eq:mcb_kernel} is the exact posterior-predictive
bridge kernel induced by the factorized endpoint posterior
\(q_{u_k}^I(\cdot\mid y)\).  Moreover, \(K_k^{\MCB}\) and
\(K_k^{\DDPM}\) have the same conditional mean, but MCB preserves additional
endpoint variance:
\begin{align}
    \E_{\MCB}[Z\mid y]
    &=
    \E_{\DDPM}[Z\mid y],
    \label{eq:mcb_ddpm_same_mean}
    \\
    \Cov_{\MCB}(Z\mid y)
    &=
    \Cov_{\DDPM}(Z\mid y)
    +
    \left(\frac{\sinh(\gamma_k)}{\sinh(u_k)}\right)^2
    \Cov_{q_{u_k}^I}(X_0^I\mid y).
    \label{eq:mcb_extra_variance}
\end{align}
\end{proposition}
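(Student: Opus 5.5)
The plan is to read the first claim directly off the definition, then get the moment identities by conditioning on the sampled endpoint and applying the laws of total expectation and total variance to the affine--Gaussian form of the bridge kernel.

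\textbf{First claim.} The posterior-predictive bridge kernel induced by an endpoint law $\pi(\cdot\mid y)$ on $\mathcal X_0$ is the mixture $\sum_{x_0} B_{u_{k+1},u_k}(dz\mid y,x_0)\,\pi(x_0\mid y)$. This is exactly the form of \eqref{eq:true_posterior_predictive_kernel}, with the true joint posterior $q(\cdot\mid X_{u_k}=y)$ replaced by $\pi$. Taking $\pi=q^I_{u_k}$ gives \eqref{eq:mcb_kernel} verbatim. Equivalently, the two-stage procedure (draw $X_0^I\sim q^I_{u_k}(\cdot\mid y)$, then $Z\sim B_{u_{k+1},u_k}(\cdot\mid y,X_0^I)$) has marginal law $K_k^{\MCB}(\cdot\mid y)$ for $Z$.

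\textbf{Affine form of the bridge.} Write $a_k:=\sinh(\gamma_k)/\sinh(u_k)$, $b_k:=\sinh(u_k-\gamma_k)/\sinh(u_k)$ and $v_k:=2\sinh(\gamma_k)\sinh(u_k-\gamma_k)/\sinh(u_k)$. By \eqref{eq:bridge_kernel_main} with $s=u_{k+1}=u_k-\gamma_k$ and $t=u_k$, we have
\[
B_{u_{k+1},u_k}(\cdot\mid y,x_0)=\cN\bigl(a_k x_0+b_k y,\;v_k I_D\bigr).
\]
So the bridge mean is affine in the endpoint, and the bridge covariance does not depend on the endpoint at all.

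\textbf{Mean.} Condition on $X_0^I$ and use the tower rule:
\[
\E_{\MCB}[Z\mid y]=a_k\,\E_{q^I_{u_k}}[X_0^I\mid y]+b_k y.
\]
The key point is that the product measure $q^I_{u_k}$ has the same one-position marginals as the true posterior, by construction in \eqref{eq:factorized_endpoint_posterior} and Assumption~\ref{asmp:exact_marginals}. The mean of a one-hot vector is determined blockwise by these marginals, so $\E_{q^I_{u_k}}[X_0^I\mid y]=m_{u_k}(y)$. The right-hand side is then exactly the mean $a_k m_{u_k}(y)+b_k y$ of $K_k^{\DDPM}$ in \eqref{eq:ddpm_kernel}, which proves \eqref{eq:mcb_ddpm_same_mean}.

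\textbf{Covariance.} Apply the law of total covariance, again conditioning on $X_0^I$:
\[
\Cov_{\MCB}(Z\mid y)=\E\bigl[v_k I_D\bigr]+\Cov\bigl(a_kX_0^I+b_ky\mid y\bigr)=v_kI_D+a_k^2\,\Cov_{q^I_{u_k}}(X_0^I\mid y).
\]
Since $\Cov_{\DDPM}(Z\mid y)=v_kI_D$, this is \eqref{eq:mcb_extra_variance}.

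There is no real obstacle here. The only point that needs care is the mean-matching step, where one must use that the product of the exact marginals has mean $m_{u_k}(y)$. It is also worth noting that $\Cov_{q^I}$ is block-diagonal with blocks $\mathrm{diag}(q_{\ell})-q_{\ell}q_{\ell}^\top$, which makes the extra variance term explicit.
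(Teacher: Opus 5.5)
Your proof is correct and is essentially the paper's argument. The paper also reads the first claim off the mixture definition and writes $Z=\alpha_k y+\beta_k X_0^I+\xi_k$ with Gaussian noise $\xi_k$ independent of the sampled endpoint. It then uses that the product of the exact marginals has mean $m_{u_k}(y)$ and takes conditional covariance; your explicit use of the law of total covariance and your block-diagonal description of the endpoint covariance only make the same computation more explicit.
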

\begin{proof}
    We defer the proof to Appendix \ref{proof:mcb_posterior_predictive}.
\end{proof}

\begin{theorem}[The factorization error is conditional multi-information]
\label{thm:factorization_error}
Fix a noise level \(t\) and a state \(x\).  Let
\[
    q_t(x_0\mid x)
    :=
    \mathcal L(X_0=x_0\mid X_t=x)
\]
be the true joint clean posterior, and let \(q_t^I\) be the product of its token
marginals as in \eqref{eq:factorized_endpoint_posterior}.  Then
\begin{equation}
\label{eq:endpoint_factorization_error}
    \KL
    \left(
        q_t(\cdot\mid x)
        \,\middle\Vert\,
        q_t^I(\cdot\mid x)
    \right)
    =
    I(X_{0,1};\ldots;X_{0,L}\mid X_t=x),
\end{equation}
where the right-hand side is the conditional multi-information among clean token
positions.

Furthermore, for the one-step bridge kernels,
\begin{equation}
\label{eq:kernel_factorization_bound}
    \KL
    \left(
        K_k^\star(\cdot\mid y)
        \,\middle\Vert\,
        K_k^{\MCB}(\cdot\mid y)
    \right)
    \le
    I(X_{0,1};\ldots;X_{0,L}\mid X_{u_k}=y).
\end{equation}
\end{theorem}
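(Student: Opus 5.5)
The plan has two parts: an exact identity for the endpoint posteriors, then a data-processing step for the kernels. Throughout, fix \(t\) and \(x\), and abbreviate \(q:=q_t(\cdot\mid x)\) and \(q^I:=q_t^I(\cdot\mid x)\).

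\textbf{Step 1: identify the KL with the multi-information.} Recall that the conditional multi-information is defined as
\[
I(X_{0,1};\ldots;X_{0,L}\mid X_t=x)=\sum_{\ell=1}^L H(X_{0,\ell}\mid X_t=x)-H(X_0\mid X_t=x),
\]
which is equivalently the KL divergence from the joint law to the product of its marginals. To make this explicit, expand the KL directly:
\[
\KL(q\Vert q^I)=\sum_{x_0\in\mathcal X_0}q(x_0)\log q(x_0)-\sum_{x_0\in\mathcal X_0}q(x_0)\sum_{\ell=1}^L\log q_{t,\ell}(x_{0,\ell}\mid x).
\]
In the second sum, marginalize each term over the coordinates other than \(\ell\). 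Since \(q_{t,\ell}\) is exactly the \(\ell\)-th marginal of \(q\), the term for position \(\ell\) becomes \(-H(X_{0,\ell}\mid X_t=x)\). This gives \eqref{eq:endpoint_factorization_error}.

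Absolute continuity causes no difficulty. If \(q(x_0)>0\), then every marginal satisfies \(q_{t,\ell}(x_{0,\ell}\mid x)>0\), so \(q^I(x_0)>0\). The state space \(\mathcal X_0\) is finite, so all sums are finite.

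\textbf{Step 2: bound the kernel KL by data processing.} Both kernels are pushforwards of their endpoint laws through one Markov kernel. Specifically,
\[
K_k^\star=\sum_{x_0}B_{u_{k+1},u_k}(\cdot\mid y,x_0)\,q(x_0\mid X_{u_k}=y),
\]
and \(K_k^{\MCB}\) is the same expression with \(q^I_{u_k}(\cdot\mid y)\) in place of the true posterior. The kernel \(x_0\mapsto B_{u_{k+1},u_k}(\cdot\mid y,x_0)\) is identical in the two cases; for fixed \(y\) it is a Gaussian with fixed covariance and a mean affine in \(x_0\).

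There are two equivalent ways to conclude. One is the data-processing inequality for KL under a common Markov kernel. The other is the chain rule: the joint KL on \((x_0,z)\) equals the endpoint KL, because the conditional laws of \(z\) coincide, and this joint KL dominates the KL of the \(z\)-marginals. Either route gives
\[
\KL(K_k^\star\Vert K_k^{\MCB})\le\KL\bigl(q_{u_k}(\cdot\mid y)\Vert q_{u_k}^I(\cdot\mid y)\bigr).
\]
Applying Step 1 at \(t=u_k\), \(x=y\) turns the right-hand side into the multi-information, which yields \eqref{eq:kernel_factorization_bound}.

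\textbf{Expected obstacle.} The main point needing care is the degenerate case \(u_{k+1}=0\). There the bridge covariance vanishes, so the kernel is a Dirac mass at \(x_0\). Data processing still applies in this case, since it holds for arbitrary Markov kernels, and the bound in fact becomes an equality. Apart from this, the argument is routine.
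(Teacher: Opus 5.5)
Your proposal is correct and follows the same route as the paper. The identity is the definition of conditional multi-information as the KL divergence from the joint posterior to the product of its marginals; your entropy expansion just makes this explicit. The kernel bound is the data-processing inequality for the common OU bridge kernel \(x_0\mapsto B_{u_{k+1},u_k}(\cdot\mid y,x_0)\), followed by applying the identity at \(t=u_k\), \(x=y\). Your extra remarks on absolute continuity and on the degenerate case \(u_{k+1}=0\) are correct (there the bridge is a Dirac mass at \(x_0\) and the bound holds with equality), but the argument does not need them.
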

\begin{proof}
    We defer the proof to Appendix \ref{proof:factorization_error}.
\end{proof}

\begin{theorem}[MCB improves the path-space KL over DDPM]
\label{thm:mcb_beats_ddpm}
Assume exact posterior marginals as in Assumption~\ref{asmp:exact_marginals}.
Let \(Y_{[0,T]}\) be the true reverse process.  Let
\(\hat Y^{\MCB}_{[0,T]}\) be the continuous-time process obtained by
interpolating Algorithm~\ref{alg:exact_mcb_ou} with OU bridges, and let
\(\hat Y^{\DDPM}_{[0,T]}\) be the corresponding frozen conditional-mean bridge
sampler.  Suppose both approximate samplers are initialized from the same
terminal law \(\pi_T\), and assume the standard absolute-continuity and Novikov
conditions under which the Girsanov identities below hold.  Then
\begin{equation}
\label{eq:main_kl_comparison}
    \KL
    \left(
        \mathrm{Law}(Y_{[0,T]})
        \,\middle\Vert\,
        \mathrm{Law}(\hat Y^{\MCB}_{[0,T]})
    \right)
    \le
    \KL
    \left(
        \mathrm{Law}(Y_{[0,T]})
        \,\middle\Vert\,
        \mathrm{Law}(\hat Y^{\DDPM}_{[0,T]})
    \right).
\end{equation}
Moreover, the difference is explicitly
\begin{align}
\label{eq:kl_gap_formula}
    &\KL
    \left(
        \mathrm{Law}(Y_{[0,T]})
        \,\middle\Vert\,
        \mathrm{Law}(\hat Y^{\DDPM}_{[0,T]})
    \right)
    -
    \KL
    \left(
        \mathrm{Law}(Y_{[0,T]})
        \,\middle\Vert\,
        \mathrm{Law}(\hat Y^{\MCB}_{[0,T]})
    \right)
    \nonumber\\
    &\qquad
    =
    \sum_{k=0}^{K-1}
    \int_{t_k}^{t_{k+1}}
    \frac{c_{T-t}^2}{\sigma_{T-t}^4}
    \E
    \left[
        \norm{
            m_{T-t}(X_{T-t})-m_{u_k}(X_{u_k})
        }^2
        -
        \norm{
            m_{T-t}(X_{T-t})-\bar m_{k,T-t}
        }^2
    \right]dt
    \ge 0.
\end{align}
Here
$
    \bar m_{k,u,\ell}
    :=
    \E[X_{0,\ell}\mid X_{u_k},X_{u,\ell}]
$ where $u=T-t$
is the endpoint estimate induced by the marginal-conditioned bridge after
marginalizing over the sampled endpoint.  The inequality is strict whenever, on
a set of positive measure, there exists a token position \(\ell\) such that
\[
    \E[X_{0,\ell}\mid X_{u_k},X_{u,\ell}]
    \neq
    \E[X_{0,\ell}\mid X_{u_k}].
\]
\end{theorem}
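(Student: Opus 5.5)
The plan is to write both samplers as diffusions on $[0,T]$ that share the diffusion coefficient $\sqrt{2}$ with the true reverse process $Y$, and then compare the two Girsanov identities term by term. The true reverse drift can be written in bridge form: $\frac{m_{T-t}(y)-y\cosh(T-t)}{\sinh(T-t)}$. To see this, write $y+2s_{T-t}(y)$ using Tweedie's formula \eqref{eq:tweedie}; note that $c_u/\sigma_u^2 = 1/(2\sinh u)$ and $1-2/\sigma_u^2 = -\cosh u/\sinh u$ after rescaling. The DDPM interpolation has drift \eqref{eq:frozen_mean_drift}, whose endpoint is frozen at $m_{u_k}(\hat Y_{t_k})$.

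The MCB interpolation on $[t_k,t_{k+1})$ works as follows. We draw $X_0^I\sim q^I_{u_k}(\cdot\mid \hat Y_{t_k})$ and run the pinned OU bridge \eqref{eq:bridge_sde} toward $X_0^I$. This process is not Markov in $\hat Y$ alone. Its Markovian projection, i.e.\ the drift seen by the filtration of $\hat Y$, replaces $X_0^I$ by $\E[X_0^I\mid \hat Y_{t_k},\hat Y_t]$. This is the standard mimicking or filtering step for mixtures of bridges.

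The key structural observation concerns the conditional law of this endpoint. Under $q^I$ and the bridge, the coordinates $\ell$ are conditionally independent given $\hat Y_{t_k}=y$. Each block $X^I_{0,\ell}$ has law $q_{u_k,\ell}(\cdot\mid y)$, and the bridge noise is isotropic, so block $\ell$ of $\hat Y_t$ depends only on $X^I_{0,\ell}$ and $y_\ell$. Hence $\E[X^I_{0,\ell}\mid \hat Y_{t_k},\hat Y_t]=\E[X^I_{0,\ell}\mid \hat Y_{t_k},\hat Y_{t,\ell}]$.

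Under exact marginals (Assumption~\ref{asmp:exact_marginals}), the pair $(X^I_{0,\ell},\hat Y_{t,\ell})$ given $y$ has the same law as $(X_{0,\ell},X_{u,\ell})$ given $X_{u_k}=y$. This holds because the true joint of $(X_0,X_{u_k},X_u)$ is also built from the OU bridge, and its $\ell$-th block depends only on $X_{0,\ell}$. It follows that the projected endpoint is exactly $\bar m_{k,u}$ evaluated along the path. I will need to check carefully the positional coupling: the process appears in the state $\hat Y$, while the expectation in \eqref{eq:kl_gap_formula} is taken along the true process $X$. Girsanov's theorem evaluates drift differences under $\mathrm{Law}(Y)$, so I substitute the true path there.

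Applying Girsanov to both samplers, with the common initialization $\pi_T$ giving identical $\KL(p_T\Vert\pi_T)$ terms, gives
\[
\KL(\mathrm{Law}(Y)\Vert \mathrm{Law}(\hat Y^{\bullet}))=\KL(p_T\Vert\pi_T)+\sum_k\int_{t_k}^{t_{k+1}}\tfrac14\tfrac{1}{\sinh^2(T-t)}\E\|m_{T-t}(X_{T-t})-\hat m^\bullet_t\|^2dt .
\]
Here $\tfrac{1}{4\sinh^2 u}=c_u^2/\sigma_u^4$ (this constant needs checking). Also $\hat m^{\DDPM}_t=m_{u_k}(X_{u_k})$ and $\hat m^{\MCB}_t=\bar m_{k,T-t}$. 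Subtracting yields \eqref{eq:kl_gap_formula}.

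For nonnegativity I use $L^2$ projections. Let $u=T-t\le u_k$. The tower property and the Markov structure of the forward process give $m_u(X_u)=\E[X_0\mid X_u]=\E[X_0\mid X_u,X_{u_k}]$. Define the sigma-algebras $\mathcal G_1=\sigma(X_{u_k})\subset \mathcal G_2=\sigma(X_{u_k},X_{u,\ell},\ \ell=1,\dots,L)\subset\mathcal H=\sigma(X_{u_k},X_u)$. Blockwise, $\bar m_{k,u,\ell}$ is the projection of $X_{0,\ell}$ onto $\sigma(X_{u_k},X_{u,\ell})\subset\mathcal H$. Therefore $m_{u,\ell}-\bar m_{k,u,\ell}$ is orthogonal to $\sigma(X_{u_k},X_{u,\ell})$-measurable functions, one of which is $\bar m_{k,u,\ell}-m_{u_k,\ell}$.

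Pythagoras then gives, blockwise,
\[
\E\|m_{u,\ell}-m_{u_k,\ell}\|^2=\E\|m_{u,\ell}-\bar m_{k,u,\ell}\|^2+\E\|\bar m_{k,u,\ell}-m_{u_k,\ell}\|^2 .
\]
Summing over $\ell$ shows the integrand gap equals $\sum_\ell\E\|\bar m_{k,u,\ell}-m_{u_k,\ell}\|^2\ge0$. This is strictly positive exactly when some $\E[X_{0,\ell}\mid X_{u_k},X_{u,\ell}]\ne\E[X_{0,\ell}\mid X_{u_k}]$ on a set of positive measure, for $t$ in a set of positive Lebesgue measure.

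I expect the main obstacle to be the Markovian-projection step. That step identifies the MCB interpolation's effective drift with $\bar m$ and justifies applying Girsanov to a process whose drift depends on the latent $X^I_0$. To handle it, I would compute the path KL on the enlarged space $(\hat Y, X_0^I)$ and use that the KL decreases under marginalization. The projection argument via the mimicking theorem is what gives the equality form stated in the theorem.
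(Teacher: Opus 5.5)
Your proposal is correct and follows the paper's proof essentially step for step. The shared steps are: bridge-form drifts for the true and DDPM processes; the filtered MCB endpoint $\E[X_0^I\mid \hat Y_{t_k},\hat Y_t]$, reduced blockwise and identified with $\bar m_{k,u}$ via exact marginals; the two Girsanov identities with weight $1/(4\sinh^2 u)=c_u^2/\sigma_u^4$; and an $L^2$-projection argument, where your Pythagorean version also makes the gap explicit as $\sum_\ell \E\|\bar m_{k,u,\ell}-m_{u_k,\ell}\|^2$.

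The one point to tighten is the tool you name for the drift-identification step. What preserves the path law is the innovations (filtering) representation of the drift relative to the natural filtration of $\hat Y$, together with the sufficiency of $(\hat Y_{t_k},\hat Y_t)$ for $X_0^I$ given the path. Gy\"ongy-type mimicking only matches one-time marginals, and your enlarged-space data-processing fallback would give a cruder upper bound, not the stated equality.
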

\begin{proof}
    We defer the proof to Appendix \ref{proof:mcb_beats_ddpm}.
\end{proof}
The key observation behind the proof is an $L^2$ projection argument.  On each discretization interval, DDPM approximates the current denoising target
\(\E[X_0\mid X_t]\) using only the coarser grid-point information
\(\E[X_0\mid X_{t_k}]\).  In contrast, the marginal-conditioned bridge refines this estimate blockwise by using
\(\E[X_{0,\ell}\mid X_{t_k},X_{t,\ell}]\), which conditions on strictly more information about the intermediate state.  Since conditional expectation is the optimal \(L^2\) predictor and conditioning on a larger sigma-algebra can only decrease the mean-squared error, the MCB endpoint estimate has no larger denoising error than the frozen DDPM estimate, with strict improvement whenever \(X_{t,\ell}\) contains additional information about \(X_{0,\ell}\) beyond \(X_{t_k}\).
\section{Experiments}
\label{sec:experiments}

We evaluate whether marginal-conditioned bridge sampling, additionally with the flexibility of  temperature scaling and nucleus sampling, empirically improves sample quality over standard ODE samplers for pretrained Flow Language Models.   

\paragraph{Model and dataset.}
We use the pretrained on LM1B~\cite{chelba2014billionwordbenchmarkmeasuring} and OWT~\cite{Gokaslan2019OpenWeb} FLM checkpoints from
\cite{lee2026flow}.  The checkpoints are trained on the datasets using one-hot token representations and a denoising
cross-entropy objective.  We keep the pretrained model fixed throughout all
experiments.  Thus, all differences in performance are entirely due to the sampling
procedure. Due to the limited space we only discuss the results evaluated on LM1B checkpoint. Results for the OWT checkpoint are qualitatively similar and are detailed in Appendix~\ref{apdx:OWT}.

\paragraph{Evaluation metrics.}
We follow the evaluation protocol of \cite{lee2026flow}.  For each sampler and
hyperparameter setting, we generate \(1024\) samples and compute generative
perplexity (Gen. PPL) using GPT-2 Large \citep{radford2019language}.  Since low Gen. PPL
can be misleading when a model produces repetitive text, we also report the
average per-sample unigram entropy.  A good sampler should have low Gen. PPL
while maintaining entropy close to the LM1B data entropy.  We use the same
sequence length, tokenizer, time grid, and model checkpoint as the original FLM
evaluation.

\paragraph{Baseline.}
The primary baseline is the standard ODE sampler used for FLM inference in
\cite{lee2026flow}.  This sampler integrates the probability-flow ODE using
Euler steps.  Let \(0=t_0<t_1<\cdots<t_K=1\) denote the sampling grid in the
FLM time parameterization (i.e. flow matching convention), and let \(y_{t_k}\) be the current continuous state.
Writing
\[
    \delta_\theta(y_{t_k},t_k)
    =
    \E_\theta[X_1\mid X_{t_k}=y_{t_k}]
\]
for the FLM denoiser, the Euler ODE update can be written as
\begin{equation}
\label{eq:ode_experiment_update}
    y_{t_{k+1}}^{\mathrm{ODE}}
    =
    \frac{1-t_{k+1}}{1-t_k}y_{t_k}
    +
    \frac{t_{k+1}-t_k}{1-t_k}
    \delta_\theta(y_{t_k},t_k).
\end{equation}
This is the conditional-mean bridge: the FLM posterior marginals are averaged
into a simplex-valued endpoint and the sampler bridges toward that mean.


\paragraph{Temperature and nucleus sampling.}
Because MCB samples from explicit token distributions, standard decoding
controls can be applied directly to the endpoint marginals.  Given logits
\(z_{\ell,v}\) or probabilities \(\pi_{\ell,v}\), temperature scaling with
temperature \(\tau>0\) is defined by
\begin{equation}
\label{eq:temperature_scaling_exp}
    \pi_{\ell,v}^{(\tau)}
    =
    \frac{\pi_{\ell,v}^{1/\tau}}
    {\sum_{v'\in\cV}\pi_{\ell,v'}^{1/\tau}}.
\end{equation}
Lower temperatures sharpen the endpoint posterior, while higher temperatures
increase endpoint diversity.  We also consider nucleus sampling with threshold
\(p\in(0,1]\).  For each position \(\ell\), we sort tokens by
\(\pi_{\ell,v}^{(\tau)}\), keep the smallest set \(S_{\ell, p}\) whose
cumulative probability is at least \(p\), set all other probabilities to
zero, and renormalize.  The default MCB setting is \(\tau=1\) and \(p=1\),
which uses the unmodified FLM posterior marginals.

\begin{figure}[ht]
  \centering
  
  \begin{minipage}[c]{0.4\textwidth}
    \centering
    \vspace{-0.2cm} 
    \includegraphics[width=1.0\linewidth]{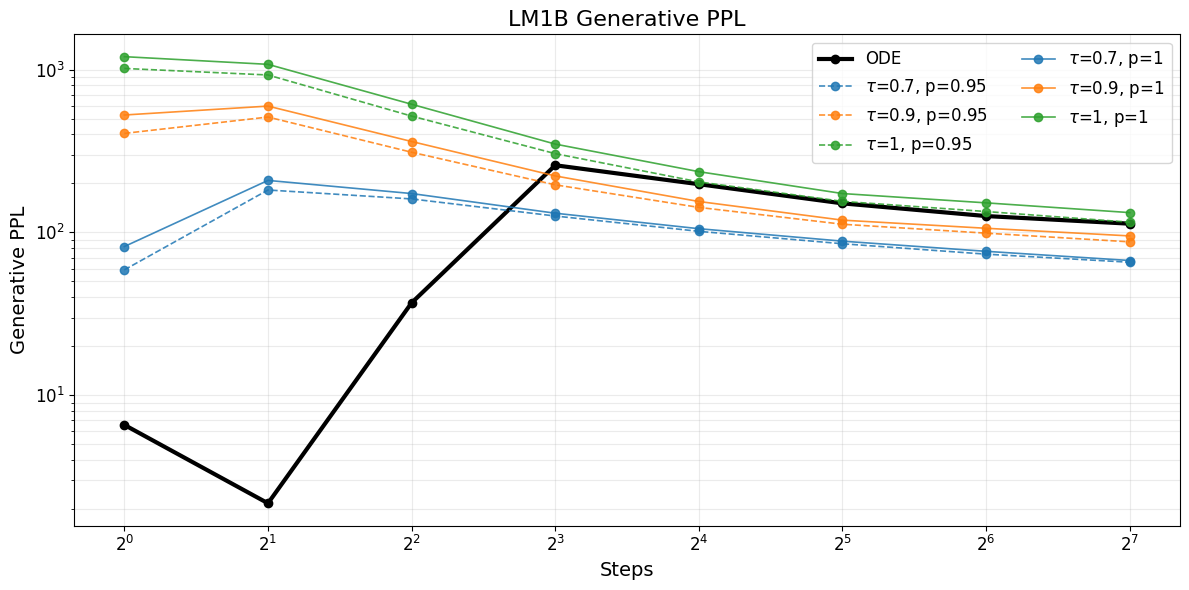}
    
    \vspace{-0.17cm} 
    
    \includegraphics[width=1.0\linewidth]{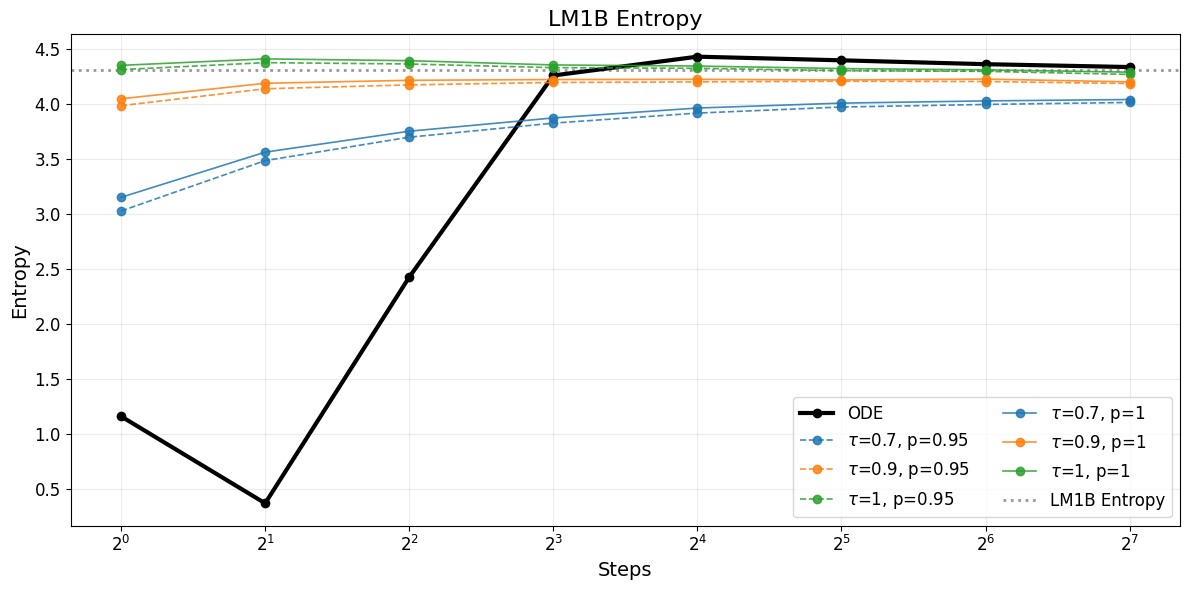}
  \end{minipage}
  \begin{minipage}[c]{0.585\textwidth}
    \centering
    \includegraphics[width=1.05\linewidth]{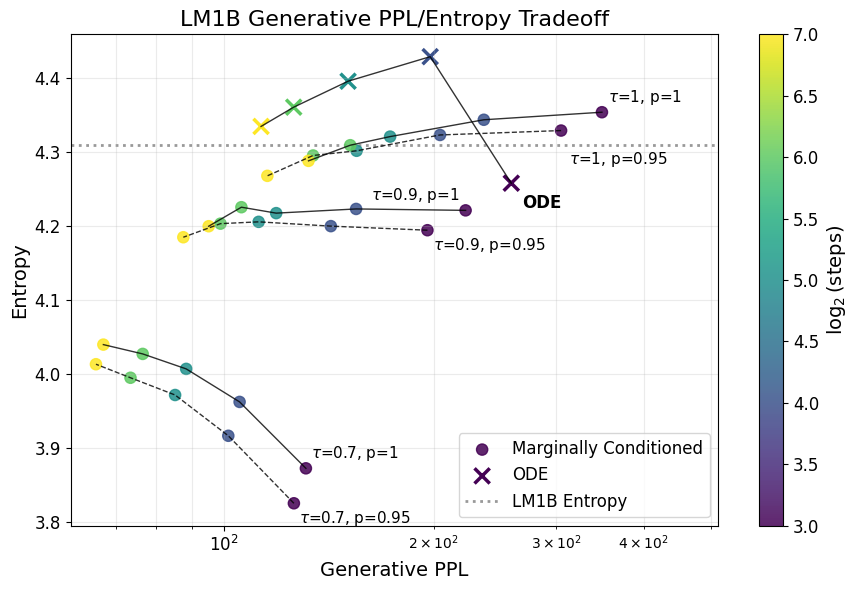}
  \end{minipage}
  
  \caption{Generative perplexity (left top) and entropy (left bottom) against the number of sampling steps for the standard ODE sampler and our MCB sampler with various configurations of temperature scaling $\tau$ and nucleus sampling $p$ on LM1B. The right plot shows the Generative PPL/Entropy Tradeoff. We note that the grey dotted line on the bottom-left plot shows the entropy of LM1B.}
  \label{fig:lm1b_mcb}
\end{figure}

\paragraph{Results.}

\looseness=-1
Figure~\ref{fig:lm1b_mcb} compares the standard ODE sampler with our
marginal-conditioned bridge sampler on LM1B with different settings for temperature scaling and nucleus sampling, where we plot generative perplexity and entropy as a function of the number of discretization steps in $\{2^i\}_{i=0}^7$ used.
We ran all of our experiments on Nvidia GH200 GPUs with each experimental configuration taking around one hour.

The top panel shows that the MCB sampler improves steadily in generative perplexity as the number of steps increases with a consistent
ordering with respect to temperature---sharper endpoint sampling
(\(\tau=0.7\)) gives the lowest generative perplexity among the marginal-conditioned variants, followed by \(\tau=0.9\) and then \(\tau=1\). Additionally, nucleus
sampling with \(p=0.95\) gives a small additional improvement over \(p=1\),
especially at small and intermediate step counts. The bottom panel shows the
corresponding entropy where we see while lower temperature reduces entropy, 
\(\tau=0.9\) and \(\tau=1.0\) remain close to the entropy of LM1B, giving a
better quality--diversity balance. Again, nucleus sampling with \(p=0.95\) provides a small change, decreasing entropy.


On the other hand, the generative perplexity for the ODE baseline dramatically drops for less than eight steps, but the corresponding entropy is
also extremely low, indicating degenerate low-diversity generations.
Additionally,
as the
number of ODE steps increases, entropy recovers toward the data entropy, but
generative perplexity becomes worse than the best marginal-conditioned variants.  In
contrast, the MCB sampler provides a smooth tradeoff: endpoint temperature controls
quality versus diversity, at sufficiently many steps the sampler remains
lower generative perplexity while maintaining non-collapsed entropy, and even in the low step regime, the MCB does not degenerate like the ODE sampler instead maintaining stable generative perplexity and entropy.

These results support the posterior-predictive interpretation of FLMs.  The FLM
output should not only be used as a simplex-valued conditional mean; it can be
used directly as a distribution over clean one-hot bridge endpoints.  This
training-free change gives better control over the quality--diversity tradeoff
while using the same pretrained model and the same number of model evaluations
as the standard ODE sampler.

\section{Conclusion and Limitations}
\label{sec:conclusion}
We have introduced a marginal-conditioned bridge sampler for Flow Language Models which exploits the discrete information already learned by FLMs---i.e. token-wise posterior  marginals. 
Rather than collapsing these marginals to a simplex-valued conditional mean, it samples clean data at the endpoint and uses the analytic OU bridge conditioned on that endpoint. The resulting method is training-free and requires no additional model evaluations; moreover, our sampler admits the incorporation of standard decoding techniques from language modeling such as temperature scaling and nucleus sampling, allowing for more flexibility in generation. 
Additionally, our analysis has shown that our marginal-conditioned bridge sampler has better performance in terms of KL when compared to standard samplers which instead utilize the conditional mean.

\paragraph{Limitations.}
While the marginal-conditioned bridge sampler yields a smaller discretization error and enables flexible decoding, it introduces structural challenges for inference acceleration via distillation. Standard flow map distillation relies on smooth, deterministic trajectories. In contrast, this approach injects discrete stochasticity at every step by sampling categorical proposals $x_0 \sim q_{u_k}^I(\cdot\mid y_{t_k})$, resulting in a non-smooth branching process.
\paragraph{Broader impacts.}
\label{sec:impact}
Because Flow Language Models and our proposed marginal-conditioned bridge sampler improve the efficiency and flexibility of text generation, they lower the computational barrier to deploying powerful language models. While this has positive implications for accessibility and efficient inference, it also inherits the dual-use risks common to all generative language models, such as the potential generation of toxic text, bias, or disinformation. Our work focuses on foundational algorithmic improvements and does not introduce new applications or specific mitigations for these broader societal risks.


 \section*{Acknowledgments}
LZ is supported by the EPSRC CDT in Modern Statistics and Statistical Machine Learning (EP/S023151/1).
IA was supported by the Engineering and Physical Sciences Research Council [grant number EP/T517811/1]. 

IA and LZ kindly thank Kianoosh Ashouritaklimi for his generous donation of one laptop.

\bibliographystyle{plain}
\bibliography{references}


\newpage

\appendix

\section{Pseudo-Code}
\label{app:pseudocode}
\begin{algorithm}[h!]
\caption{Marginal-conditioned OU bridge sampler}
\label{alg:exact_mcb_ou}
\begin{algorithmic}[1]
\Require Reverse grid \(0=t_0<t_1<\cdots<t_K=T\)
\Require Posterior-marginal model \(p_\theta\)
\Require Vocabulary \(\mathcal V\), sequence length \(L\), dimension \(D=L|\mathcal V|\)
\Require Optional decoding parameters: temperature \(\tau\), nucleus threshold \( p \)
\State Sample \(y_{t_0}\sim\mathcal N(0,I_D)\)
\For{\(k=0,\ldots,K-1\)}
    \State \(u_k\gets T-t_k\) \Comment{Current forward noise level}
    \State \(u_{k+1}\gets T-t_{k+1}\)
    \State \(\gamma_k\gets t_{k+1}-t_k=u_k-u_{k+1}\)
    \For{\(\ell=1,\ldots,L\)}
        \State Evaluate the posterior token marginal
        \[
            \pi_\ell(\cdot)
            \gets
            p_{\theta,u_k,\ell}(\cdot\mid y_{t_k})
        \]
        \State Optionally apply temperature scaling and/or nucleus sampling to \(\pi_\ell\)
        \State Sample \(w_\ell\sim\mathrm{Cat}(\pi_\ell)\)
        \State Set \((x_0)_\ell\gets e_{w_\ell}\)
    \EndFor
    \State Compute the exact OU bridge mean
    \[
        \mu_k
        \gets
        \frac{\sinh(u_{k+1})}{\sinh(u_k)}\,y_{t_k}
        +
        \frac{\sinh(u_k-u_{k+1})}{\sinh(u_k)}\,x_0
    \]
    \State Compute the exact OU bridge covariance
    \[
        \Sigma_k
        \gets
        2
        \frac{\sinh(u_{k+1})\sinh(u_k-u_{k+1})}{\sinh(u_k)}
        I_D
    \]
    \State Sample the next state from the pinned OU bridge
    \[
        y_{t_{k+1}}
        \sim
        \mathcal N(\mu_k,\Sigma_k)
    \]
\EndFor
\State \Return token decoding of \(y_{t_K}\)
\end{algorithmic}
\end{algorithm}

\section{Proofs}

\subsection{Proof of Proposition \ref{prop:mcb_posterior_predictive}}\label{proof:mcb_posterior_predictive}

\begin{proof}
\looseness=-1
The first statement follows directly from the posterior-predictive identity:
sample an endpoint \(X_0^I\sim q_{u_k}^I(\cdot\mid y)\), then sample the next
state from the bridge \(B_{u_{k+1},u_k}(\cdot\mid y,X_0^I)\).  This is exactly
\eqref{eq:mcb_kernel}.

Let
\[
    \beta_k:=\frac{\sinh(\gamma_k)}{\sinh(u_k)},
    \qquad
    \alpha_k:=\frac{\sinh(u_k-\gamma_k)}{\sinh(u_k)}.
\]
The MCB transition can be written as
\[
    Z=\alpha_k y+\beta_k X_0^I+\xi_k,
    \qquad
    \xi_k\sim
    \cN
    \left(
        0,\,
        2\frac{\sinh(\gamma_k)\sinh(u_k-\gamma_k)}{\sinh(u_k)}I_D
    \right),
\]
with \(\xi_k\) independent of \(X_0^I\).  Since
\[
    \E[X_0^I\mid y]
    =
    m_{u_k}(y),
\]
the conditional mean of \(Z\) is the same as the DDPM bridge-to-mean update.
Taking conditional covariance gives \eqref{eq:mcb_extra_variance}.  Therefore
DDPM is the moment-collapsed version of the factorized posterior-predictive
bridge.
\end{proof}

\subsection{Proof of Theorem \ref{thm:factorization_error}}\label{proof:factorization_error}

\begin{proof}
The identity \eqref{eq:endpoint_factorization_error} is exactly the definition
of conditional mutual-information:
\[
    I(X_{0,1};\ldots;X_{0,L}\mid X_t=x)
    =
    \KL
    \left(
        \mathcal L(X_0\mid X_t=x)
        \,\middle\Vert\,
        \prod_{\ell=1}^L
        \mathcal L(X_{0,\ell}\mid X_t=x)
    \right).
\]

For the bridge-kernel bound, observe that \(K_k^\star\) and \(K_k^{\MCB}\) are
obtained by applying the same blockwise OU bridge channel to two different
endpoint posteriors: the true joint posterior \(q(\cdot\mid X_{u_k}=y)\) and
the factorized posterior \(q_{u_k}^I(\cdot\mid y)\).  Since the OU bridge
transition is a Markov kernel, the data-processing inequality for KL gives
\[
    \KL
    \left(
        K_k^\star(\cdot\mid y)
        \,\middle\Vert\,
        K_k^{\MCB}(\cdot\mid y)
    \right)
    \le
    \KL
    \left(
        q(\cdot\mid X_{u_k}=y)
        \,\middle\Vert\,
        q_{u_k}^I(\cdot\mid y)
    \right).
\]
Applying \eqref{eq:endpoint_factorization_error} at \(t=u_k\) gives
\eqref{eq:kernel_factorization_bound}.
\end{proof}

\subsection{Proof of Theorem \ref{thm:mcb_beats_ddpm}}\label{proof:mcb_beats_ddpm}

\begin{proof}
We compare the two samplers through their continuous bridge interpolations.

\paragraph{Step 1: the true reverse drift.}
Let \(u=T-t\).  By Tweedie's formula and the OU bridge representation, the true
reverse drift can be written as
\begin{equation}
\label{eq:true_bridge_drift}
    b_t^\star(y)
    =
    \frac{m_u(y)-y\cosh u}{\sinh u},
    \qquad
    m_u(y):=\E[X_0\mid X_u=y].
\end{equation}

\paragraph{Step 2: the DDPM bridge drift.}
On the interval \([t_k,t_{k+1})\), DDPM freezes the endpoint estimate at the
grid point \(t_k\).  Its drift is therefore
\begin{equation}
\label{eq:ddpm_bridge_drift}
    b_t^{\DDPM}(y;y_{t_k})
    =
    \frac{m_{u_k}(y_{t_k})-y\cosh u}{\sinh u},
    \qquad
    u_k=T-t_k.
\end{equation}

\paragraph{Step 3: the MCB bridge drift.}
Conditioned on a sampled endpoint \(x_0\), the MCB interpolation is an OU bridge
with drift
\[
    \frac{x_0-y\cosh u}{\sinh u}.
\]
After marginalizing over the sampled endpoint, the drift is obtained by replacing
\(x_0\) by its conditional expectation given the current bridge state.  Thus
\begin{equation}
\label{eq:mcb_bridge_drift}
    b_t^{\MCB}(y;y_{t_k})
    =
    \frac{\bar m_{k,u}(y,y_{t_k})-y\cosh u}{\sinh u},
\end{equation}
where the \(\ell\)-th block is
\begin{equation}
\label{eq:mcb_endpoint_filter}
    \bar m_{k,u,\ell}(y,y_{t_k})
    =
    \E[X_{0,\ell}^I\mid X_{u_k}=y_{t_k},X_{u,\ell}=y_\ell]
\end{equation}
under the factorized endpoint posterior \(q_{u_k}^I(\cdot\mid y_{t_k})\) and the
coordinate-wise OU bridge likelihood.

Because the endpoint posterior marginals are exact, Bayes' rule identifies
\eqref{eq:mcb_endpoint_filter} with the corresponding true conditional
expectation:
\begin{equation}
\label{eq:mcb_filter_true_posterior}
    \bar m_{k,u,\ell}(y,y_{t_k})
    =
    \E[X_{0,\ell}\mid X_{u_k}=y_{t_k},X_{u,\ell}=y_\ell].
\end{equation}
Indeed, conditional on \(X_{u_k}=y_{t_k}\), the MCB prior for
\(X_{0,\ell}\) is the exact marginal posterior
\(q_{u_k,\ell}(\cdot\mid y_{t_k})\), and the OU bridge likelihood for
\(X_{u,\ell}=y_\ell\) depends only on \(X_{0,\ell}\) and
\((y_{t_k})_\ell\).

\paragraph{Step 4: Girsanov identities.}
The three processes have the same diffusion coefficient \(\sqrt{2}I_D\).  Under
the stated absolute-continuity assumptions, Girsanov's theorem gives
\begin{align}
\label{eq:girsanov_mcb_identity}
    &\KL
    \left(
        \mathrm{Law}(Y_{[0,T]})
        \,\middle\Vert\,
        \mathrm{Law}(\hat Y^{\MCB}_{[0,T]})
    \right)
    \nonumber\\
    &\qquad
    =
    \KL(p_T\Vert\pi_T)
    +
    \sum_{k=0}^{K-1}
    \int_{t_k}^{t_{k+1}}
    \frac{c_{T-t}^2}{\sigma_{T-t}^4}
    \E
    \left[
        \norm{
            m_{T-t}(X_{T-t})-\bar m_{k,T-t}
        }^2
    \right]dt,
\end{align}
and
\begin{align}
\label{eq:girsanov_ddpm_identity}
    &\KL
    \left(
        \mathrm{Law}(Y_{[0,T]})
        \,\middle\Vert\,
        \mathrm{Law}(\hat Y^{\DDPM}_{[0,T]})
    \right)
    \nonumber\\
    &\qquad
    =
    \KL(p_T\Vert\pi_T)
    +
    \sum_{k=0}^{K-1}
    \int_{t_k}^{t_{k+1}}
    \frac{c_{T-t}^2}{\sigma_{T-t}^4}
    \E
    \left[
        \norm{
            m_{T-t}(X_{T-t})-m_{u_k}(X_{u_k})
        }^2
    \right]dt.
\end{align}
The coefficient follows from
\[
    \frac{1}{4\sinh^2 u}
    =
    \frac{c_u^2}{\sigma_u^4},
    \qquad
    u=T-t,
\]
since \(1/\sinh u=2c_u/\sigma_u^2\).

\paragraph{Step 5: the denoising error of MCB is smaller.}
It remains to show that, for every \(k\) and \(u\in[u_{k+1},u_k]\),
\begin{equation}
\label{eq:projection_inequality_goal}
    \E
    \left[
        \norm{
            m_u(X_u)-\bar m_{k,u}
        }^2
    \right]
    \le
    \E
    \left[
        \norm{
            m_u(X_u)-m_{u_k}(X_{u_k})
        }^2
    \right].
\end{equation}

Fix a token position \(\ell\).  Define
\[
    M_\ell:=\E[X_{0,\ell}\mid X_u],
    \qquad
    G:=\sigma(X_{u_k}),
    \qquad
    H_\ell:=\sigma(X_{u_k},X_{u,\ell}).
\]
Since \(u<u_k\), the OU process satisfies the Markov relation
\[
    X_0 \longrightarrow X_u \longrightarrow X_{u_k}.
\]
Therefore
\[
    \E[X_{0,\ell}\mid G]
    =
    \E[M_\ell\mid G],
    \qquad
    \E[X_{0,\ell}\mid H_\ell]
    =
    \E[M_\ell\mid H_\ell].
\]
The first conditional expectation is the DDPM endpoint estimate for block
\(\ell\), and the second is the MCB endpoint estimate from
\eqref{eq:mcb_filter_true_posterior}.

Because \(G\subset H_\ell\), orthogonal projection in \(L^2\) gives
\[
    \E
    \left[
        \norm{
            M_\ell-\E[M_\ell\mid H_\ell]
        }^2
    \right]
    \le
    \E
    \left[
        \norm{
            M_\ell-\E[M_\ell\mid G]
        }^2
    \right].
\]
Substituting back, we get
\begin{align*}
    \E
    \left[
        \norm{
            \E[X_{0,\ell}\mid X_u]
            -
            \E[X_{0,\ell}\mid X_{u_k},X_{u,\ell}]
        }^2
    \right]
    \le
    \E
    \left[
        \norm{
            \E[X_{0,\ell}\mid X_u]
            -
            \E[X_{0,\ell}\mid X_{u_k}]
        }^2
    \right].
\end{align*}

Summing over \(\ell=1,\ldots,L\) gives
\eqref{eq:projection_inequality_goal}.

Combining \eqref{eq:girsanov_mcb_identity},
\eqref{eq:girsanov_ddpm_identity}, and
\eqref{eq:projection_inequality_goal} proves
\eqref{eq:main_kl_comparison}.  Subtracting the two Girsanov identities gives
\eqref{eq:kl_gap_formula}.  Strictness follows from strict improvement of the
\(L^2\) projection whenever
\[
    \E[X_{0,\ell}\mid X_{u_k},X_{u,\ell}]
    \neq
    \E[X_{0,\ell}\mid X_{u_k}]
\]
with positive probability for some \(\ell\).
\end{proof}

\section{Additional Experiments on OWT}
\label{apdx:OWT}
In this section, we present the evaluation of the marginal-conditioned bridge sampler on the OpenWebText (OWT) dataset. We follow the same experimental setup and evaluation protocol as the LM1B experiments detailed in Section~\ref{sec:experiments}, with two key distinctions: we generate longer text sequences of $1024$ tokens, and we evaluate the samplers over an extended range of discretization steps following the powers of two up to $1024$, specifically $\{2^i\}_{i=0}^{10}$.

As illustrated in Figure~\ref{fig:owt_mcb}, the OWT results exhibit similar fundamental dynamics to the LM1B evaluation while highlighting the scalability of the sampler over longer generations and more integration steps. The generative perplexity of the marginal-conditioned bridge sampler improves consistently as the step count increases. The temperature parameter provides a reliable mechanism for navigating the quality-diversity tradeoff: sharper sampling ($\tau=0.7$) achieves the lowest generative perplexity, while higher temperatures ($\tau=0.9$ and $\tau=1.0$) preserve higher sequence diversity, keeping the entropy closely aligned with the empirical entropy of the OWT dataset. Applying nucleus sampling at $p=0.95$ introduces a slight further reduction in both generative perplexity and entropy across the step variants.

The standard ODE baseline demonstrates severe degradation in the low-step regime (under eight steps), where a superficial drop in generative perplexity is paired with a near-zero entropy score, indicating repetitive, collapsed sequence generation. While the ODE sampler's entropy recovers as the step count scales toward $1024$, its generative perplexity remains strictly worse than the optimal marginal-conditioned configurations. The MCB sampler avoids this low-step collapse entirely, maintaining stable generative perplexity and entropy even at minimal step counts, and continues to leverage endpoint randomness to achieve superior language modeling performance at high step counts.
\begin{figure}[ht]
  \centering
  
  \begin{minipage}[c]{0.4\textwidth}
    \centering
    \vspace{-0.2cm} 
    \includegraphics[width=1.0\linewidth]{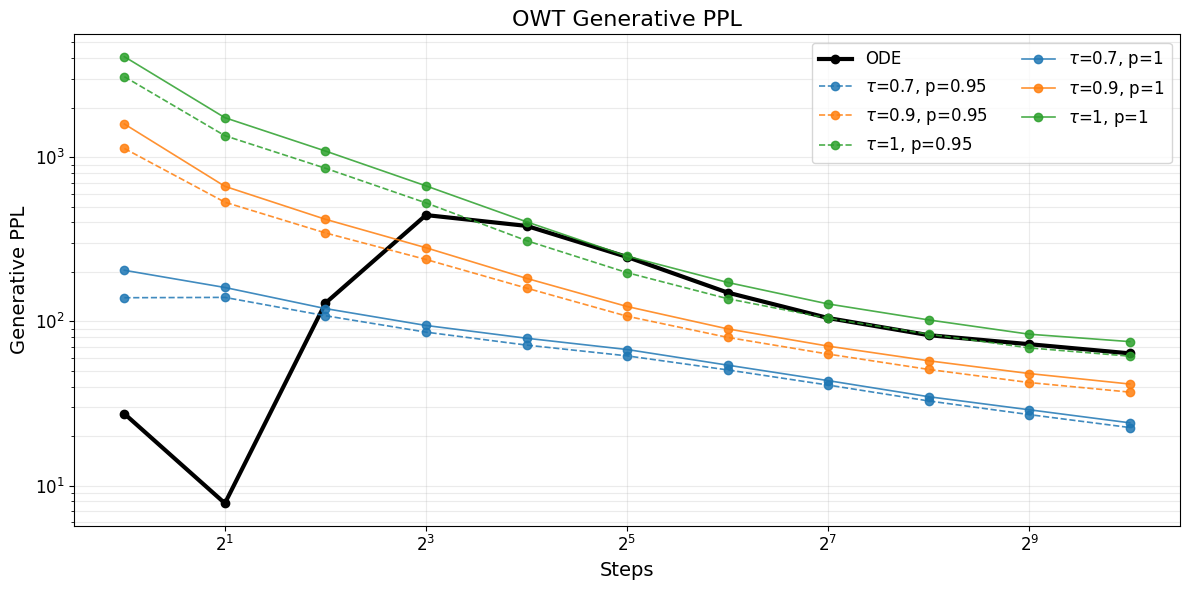}
    
    \vspace{-0.17cm} 
    
    \includegraphics[width=1.0\linewidth]{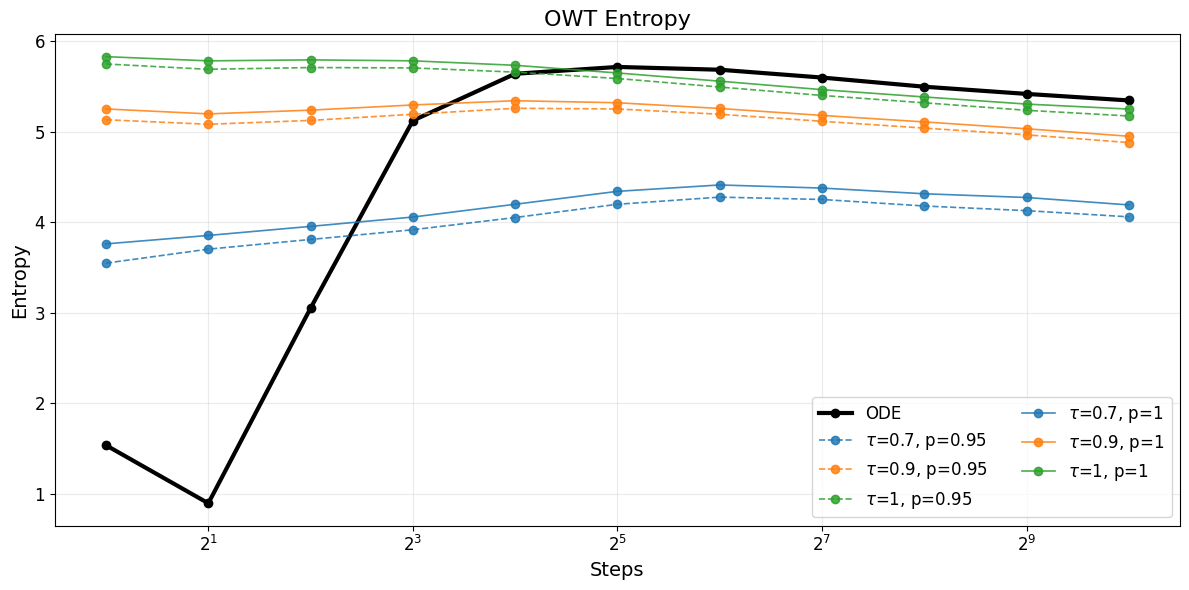}
  \end{minipage}
  \begin{minipage}[c]{0.585\textwidth}
    \centering
    \includegraphics[width=1.05\linewidth]{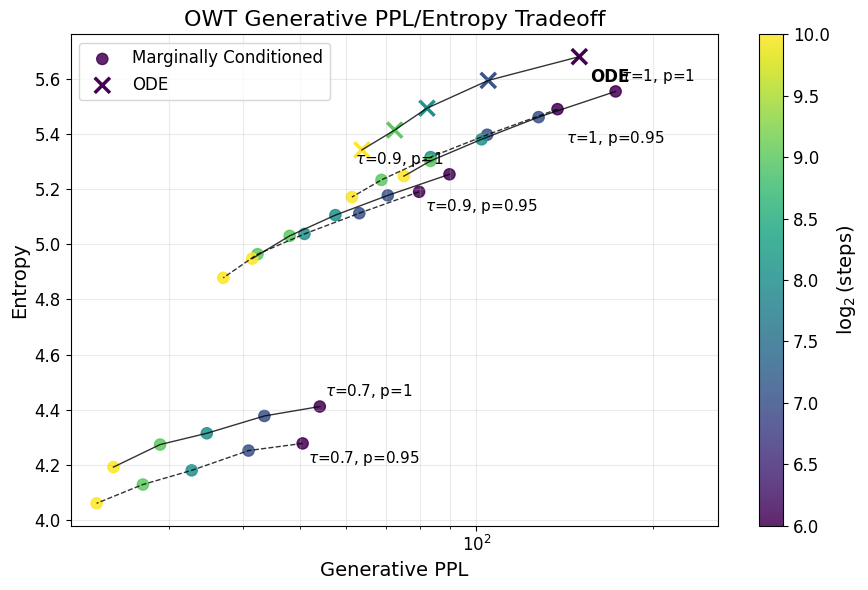}
  \end{minipage}
  
  \caption{Generative perplexity (left top) and entropy (left bottom) against the number of sampling steps for the standard ODE sampler and our MCB sampler with various configurations of temperature scaling $\tau$ and nucleus sampling $p$ on OWT. The right plot shows the Generative PPL/Entropy Tradeoff.}
  \label{fig:owt_mcb}
\end{figure}

\section{Samples}
\label{apdx:samples}
\begin{promptbox}{{OWT: 1024 steps, $T=0.7, p=0.95$}}
\small
<|endoftext|> it’s time, on the world’s stage, part of the day, as far as he wanted the government going, hearing from the person of that day.     \n\n     That on television couldn’t be a lot more important than his military. The day, though, and the military decision in the U.S. had not served his attention, and how well as the people of the people.     \n\n     Obama didn’t answer, and he didn’t have. So, he could hear the man, who was back at a microphone. “Go to the U,” and another said after the other.     \n\n   
\ommit
Now, as from a U.S. law and intelligence perspective, because of a lack of U.S. information about the kind in research, and the background of all of the children in there, this position could come to have the same, be given the recognition of the needs of, and the evidence against the children of some and the rest.     \n\n     In turn, of the British and the American, there’s not a difference of place, and only in the 8 U.S. House of the State.     \n\n     Now to the Ambassador’s that role in the building. In the U.S., the Department of the State Department of Defense played the same role as the military, which led the military on the 12 and 10. In the U.S. 12, the White House, which, President or Obama, could choose who was, was not “elected” when they were there<|endoftext|>
\end{promptbox}
\begin{promptbox}{{OWT: 1024 steps, $T=0.9, p=0.95$}}
\small
<|endoftext|> no national, and little corporate, time. They, attended by major cities, mass cities and commissions, were seen as the men and women’s worst bearer up until his time, through lines such as “In the streets, a look for the sea and air and out the desert. The that part of this town’s political arm because the state is the more you look in the eye the place we live gives you little, and the more new country you’re likely to.”  \n\n  Time for the People  \n\n  The picture of the new administration Zalin has been secretary of state in the last eight years.  \n\n  It seems crystal clear that the leader needs to find world-wide, loyalty to country and a power even more credicallyoming and carried on into modern life.
 \n\n 
\ommit
“We are not reneging on certain aspects of freedom, the history program has to be familiar with this and the experience of going to the American to, and we know that the government has to get into the list and align with it that we can do with the great Department of state as President and the Secretary of State. It is going to be a time for people to start and forget all the past achievements of the year.”  \n\n  The challenge for the present is power, and the Secretary says there is a chance that China will push the wrong way and follow its results, and this should not be the responsibility of its government.  \n\n  The Ambassador is precise in stating that, for 2017, state and the people of another are included. The local part of the board is given a state, federal, community, partnership.  \n\n  Although all of that year’s expected states is actually in place with this deal, there are the American, Chinese, European, and Chinese of state, 6,10,11, 19, 60,31, and Clinton, states.  \n\n  A great difference between these two<|endoftext|>
\end{promptbox}
\begin{promptbox}{{OWT: 1024 steps, $T=1.0, p=0.95$}}
\small
      <|endoftext|> your fingers with this key as you pedal them for the Olympicides, probably, as I most see them as over the past year. I said it, your frame? Either you have the wrong way off at 3:20 or I remember when it was me and the lead was at a 2 year road beginning on a halk in an instant and you will be left completelygin disappointed. It’s very easy. And it can pop up to shop many times for a successor. Pick it, buy a new one and it will last a day for the weary part, it’s just money. After four months the holds itself up in the time it takes to charge it. For the old son of it as a hand. It still costs so much – maybe one in a few.    \n\n    So watched the hisa line and his first ones champaxed so that was 4-8, no problem was in fact they just had one guy on. And a time 27:15 was all day. So the only penalty wish Miles was competitively contracting for a pick-up drop at the beginning was that at the top, the promoters start doing this, and just for the new mentor. But at this hour he heads up until the new horse jump off the top of that crack at his pace to get off the stage on the track. The new guy turns that into a top jump. I like the process but through it wrestling is so easy to be made, and most importantly it might be close.    \n\n    
      \ommit
      The Lord looking up one very young couple is interesting. And in our society, if they look better with less awe than at any. That was not when I wanted to know that whatever shows of the WhitneyPRO-Ike dances will continue when the National opens in the autumn of Public opens.    \n\n    It’s a funny thing. Phillip is Huntington, V.C. and he worked all day. “I’d just do songs by my side at the new vicar’s boarding school,” the 73-year-old told the New York Daily Company. “Let me famous. No TOO you my undergrad… It was afraid of my life.” And he adds that, By the end of this year the scouts had done so at about 20, and the damage’s going catching on in the wider world. Hollywoodirever, down in and live extremely hard – the psychology is extremely – if not very young; then he just means we’re showing more and more people that a new country is their generation going. Everything that<|endoftext|>,
\end{promptbox}
\begin{promptbox}{{OWT: 1024 steps, ODE sampler}}
    \small
    <|endoftext|> and none of it might be just hard to find out about the role ofand what it involves the politicians and their families, in a particular way of doing something that is ethical and political. Ask any of the kinds of questions that may be raised from the public interest from politicians and be very different and of different backgrounds and of course, from the law side, and it’s a wide-ranging list of questions to answer, but it is setting us up to throw us off. It’s second nature and we are trying to protect the public interest and by re-hroning over the details of the previous Party. We’re not always-telling the people about their concerns, but it’s certainly second, still the way to avoid the truth for some time and your only action from the legal department can help find a solution to it.” \n\n “At its core, the Drapers are letting go.”<|endoftext|>Arests with a more serious investigation were to be done about 10 p.m. to 4 a.m Monday-Friday. 
    \n\n
    \ommit
    Of course, in times of crisis, the forces that produced the Ethel “Frozen” goes along the same lines, although any sort of who has really never been seen, and those from the general bear the brunt of the doubt. \n\n Keenan, also then-chief of the BHA, came out in April calling for the murder of a police officer while co-acting out a murder investigation. It was published in the BSO’s “The Book of War,” including as a historical crime. Among the articles published in the BHA described the three women, she added, as “digngers and Ospers of the gang” that were composed of various police officers and more than one in the more serious cases. \n\n <|endoftext|>
\end{promptbox}

\end{document}